\documentclass[conference]{IEEEtran}

\usepackage{cite}
\usepackage{url}
\usepackage{amsmath,amssymb,amsfonts,amsthm}
\usepackage{latexsym}
\usepackage{booktabs}
\usepackage{enumitem}
\usepackage{graphicx}
\usepackage{color}
\usepackage{bm}
\usepackage{algorithm}
\usepackage[noend]{algorithmic}
\usepackage{mathtools}
\usepackage{textcomp}
\usepackage{xcolor}

\newtheorem{theorem}{Theorem}
\newtheorem{lemma}{Lemma}

\newtheorem{remark}{Remark}

\begin{document}

\title{Robust Multi-Agent Bandits with Heavy-Tailed Rewards and Information Asymmetry}

\author{\IEEEauthorblockN{Daphne Feng\IEEEauthorrefmark{1},
Ricardo Parada\IEEEauthorrefmark{2},
Lily Jiang\IEEEauthorrefmark{1},
Sophia Yi\IEEEauthorrefmark{1},
William Chang\IEEEauthorrefmark{1}}
\IEEEauthorblockA{\IEEEauthorrefmark{1}Department of Mathematics, University of California, Los Angeles, Los Angeles, CA, USA\\
\IEEEauthorrefmark{2}Department of Mathematics, University of California, Riverside, Riverside, CA, USA}
}

\maketitle

\begin{abstract}
The multi-armed bandit problem is a central framework in sequential decision-making, extensively studied under sub-Gaussian reward assumptions. However, real-world applications often involve heavy-tailed reward distributions and decentralized, information-asymmetric interactions. We study multi-agent multi-armed bandits with heavy-tailed rewards under three information-asymmetry regimes: unobserved actions with common rewards, observed actions with independent rewards, and unobserved actions with independent rewards. We develop robust decentralized algorithms for each setting and derive regret guarantees that nearly match centralized heavy-tailed rates. Experiments on a Pareto-distributed reward environment validate our theoretical findings and illustrate the trade-offs between synchronization, coordination, and exploration across the three regimes.
\end{abstract}

\begin{IEEEkeywords}
Multi-armed bandits, heavy-tailed rewards, decentralized learning, information asymmetry
\end{IEEEkeywords}


\section{Introduction}

The multi-armed bandit (MAB) problem is a core model for sequential decision-making under uncertainty, originating in work on adaptive experimentation and Bayesian selection~\cite{Robbins1952SomeAO, thompson1933likelihood}. At each round a learner selects an action and observes a random payoff, balancing exploration of uncertain actions against exploitation of apparently good ones. Bandit models underpin data-driven decision systems---online experimentation, recommendation, resource allocation, spectrum access, multi-robot coordination---which are typically \emph{distributed} in ways not captured by single-agent abstractions: several agents learn simultaneously while each observes only part of the system state.

The multi-player MAB (MMAB) literature spans several information structures. In one line, players share information over communication graphs or gossip protocols~\cite{kleinberg2008collaborative, szorenyi2013gossip}. Another has players choose from a common arm set where collisions couple outcomes~\cite{kalathil2014decentralized, shi2021collision}. More recently, cooperative MMAB has been studied under limited or no communication with structured observation asymmetries~\cite{chang2021multiplayer, chang2023optimal}; see~\cite{boursier2024survey} for a survey. These works show that even without explicit messaging, agents can sometimes coordinate through shared structure or pre-agreed protocols.

A largely orthogonal challenge is that many reward signals are \emph{heavy-tailed}: rare extreme events dominate observations, producing weak concentration and rendering sub-Gaussian analyses inaccurate. Heavy tails arise naturally in financial returns, network traffic bursts, and outlier-prone performance metrics. Robust algorithms for heavy-tailed rewards include robust-UCB methods~\cite{bubeck2012banditsheavytail} and deterministic exploration--exploitation schedules~\cite{vakili2013dsee}, with extensions to pure exploration~\cite{yu2018pure}, linear bandits~\cite{shao2018almost}, and minimax-optimal procedures~\cite{lee2024minimax}, while Catoni-style confidence sequences sharpen what is achievable under weak moment assumptions~\cite{wang2023catoniCS}.

The \emph{intersection} of cooperative multi-agent bandits, heavy-tailed rewards, and decentralized operation with no online communication remains underexplored. Existing multi-agent heavy-tailed work relies on explicit communication: \cite{dubey2020cooperative} considers delayed message passing and~\cite{wang2025mmabfully} studies graph-based communication. We ask: what is achievable when agents coordinate implicitly via a pre-agreed protocol?

\textbf{Our contributions.} We introduce three problem formulations capturing distinct information asymmetries in multi-agent heavy-tailed bandits: common rewards with unobserved actions (Problem~A), independent rewards with observed actions (Problem~B), and independent rewards with unobserved actions (Problem~C). For each we develop a robust decentralized algorithm---\texttt{mRUCB-A}, \texttt{mRUCB-Intervals}, and \texttt{mHT-DSEE}---and prove regret guarantees summarized in Table~\ref{tab:summary}. The robust mean estimator and the single-agent concentration arguments are adapted from~\cite{bubeck2012banditsheavytail, vakili2013dsee}; our contribution lies in the multi-agent formulation, in the use of intentional action deviations as an implicit signaling channel, and in a unified comparison of what each information structure costs.

\begin{table}[t]
\caption{Summary of information structures and regret bounds.}
\label{tab:summary}
\centering
\begin{tabular}{lccc}
\toprule
& \textbf{Prob.\ A} & \textbf{Prob.\ B} & \textbf{Prob.\ C} \\
\midrule
Actions observed? & No & Yes & No \\
Rewards shared? & Yes & No (i.i.d.) & No (i.i.d.) \\
Algorithm & \texttt{mRUCB-A} & \texttt{mRUCB-Int.} & \texttt{mHT-DSEE} \\
Regret & $O\!\left(\!\frac{\log T}{\Delta^{1/\varepsilon}}\!\right)$ & $O\!\left(\!\frac{\log T}{\Delta^{1/\varepsilon}}\!\right)$ & $O(\log^2 T)$ \\
\bottomrule
\end{tabular}
\end{table}


\section{Preliminaries}\label{sec:preliminary}

\subsection{Heavy-tailed bandits}

Consider a stochastic MAB with $K$ arms. Each arm $\bm{a}\in\mathcal{A}:=\{1,\dots,K\}$ has an unknown reward distribution $\nu_{\bm{a}}$ with mean $\mu_{\bm{a}}$. At round $t$, the agent selects arm $\bm{a}_t$ and observes a reward drawn from $\nu_{\bm{a}_t}$. The expected regret at horizon $T$ is
\begin{equation}\label{eq:regret_single}
R_T = T\mu^\star - \sum_{t=1}^T \mathbb{E}[\mu_{\bm{a}_t}] = \sum_{\bm{a}\in\mathcal{A}} \Delta_{\bm{a}}\,\mathbb{E}[n_{\bm{a}}(T)],
\end{equation}
where $\mu^\star = \max_{\bm{a}} \mu_{\bm{a}}$, $\Delta_{\bm{a}} = \mu^\star - \mu_{\bm{a}}$ is the suboptimality gap, and $n_{\bm{a}}(T)$ is the number of pulls. We assume heavy-tailed rewards: there exist $\varepsilon\in(0,1]$ and $v>0$ such that for all $\bm{a}\in\mathcal{A}$,
\begin{equation}\label{eq:moment}
\mathbb{E}[|X_{\bm{a}} - \mu_{\bm{a}}|^{1+\varepsilon}] \le v.
\end{equation}
This allows distributions with infinite variance (when $\varepsilon<1$), capturing Pareto, Student-$t$, and other heavy-tailed families; smaller $\varepsilon$ corresponds to heavier tails.

\subsection{Multi-agent extension}

We extend the setting to $M$ players, where player $i$ has an individual action set $\mathcal{A}_i$ of size $K_i$. The joint action space is $\mathcal{A} = \mathcal{A}_1\times\cdots\times\mathcal{A}_M$, containing $K^M := \prod_{i=1}^M K_i$ joint arms. At each round $t$, each player simultaneously selects an arm, forming joint arm $\bm{a}(t)=(a_1(t),\dots,a_M(t))$, and then observes a reward sampled from $\nu_{\bm{a}(t)}$. The cumulative regret is $R_T = T\mu^\star - \sum_{t=1}^T \mathbb{E}[X_{\bm{a}(t)}]$, where $\mu^\star = \max_{\bm{a}\in\mathcal{A}}\mu_{\bm{a}}$. Players may agree on a strategy beforehand and know each other's action spaces, but \emph{cannot communicate during learning}. We consider three information structures, each matching a distinct class of deployment.

\emph{Problem~A (action asymmetry).} All players observe the same reward realization $X_{\bm{a}(t)}$ but not each other's actions. This is the situation of a team optimizing a single aggregate metric: transmitters in a shared spectrum band that observe total network throughput, or advertising channels evaluated against one conversion count, where the aggregate is instrumented but attribution to individual actions is not.

\emph{Problem~B (reward asymmetry).} Players observe the joint action $\bm{a}(t)$ but each receives an independent sample $X_{\bm{a}(t)}^i \sim \nu_{\bm{a}(t)}$. This matches federated or multi-site experimentation: a configuration is chosen jointly and logged centrally, so every site knows what was deployed, while each site measures only its own privately held outcomes.

\emph{Problem~C (full asymmetry).} Players observe neither others' actions nor a common reward; each receives an i.i.d.\ sample. This models fully decentralized deployments such as sensor or robot teams operating with no backhaul, where each unit sees only its own measurements.

\subsection{Robust upper confidence bounds}\label{sec:rucb}

Throughout, $\widehat{\mu}_{\bm{a}}(t)$ is the truncated mean of~\cite{bubeck2012banditsheavytail}: writing $X_{\bm{a},1},\dots,X_{\bm{a},n_{\bm{a}}(t)}$ for the rewards observed from arm $\bm{a}$,
\begin{equation}\label{eq:trunc}
\widehat{\mu}_{\bm{a}}(t) = \frac{1}{n_{\bm{a}}(t)}\sum_{s=1}^{n_{\bm{a}}(t)} X_{\bm{a},s}\,
\mathbf{1}\!\left\{|X_{\bm{a},s}| \le \left(\tfrac{v s}{\log(T^\gamma)}\right)^{\frac{1}{1+\varepsilon}}\right\}.
\end{equation}
The robust upper confidence bound (RUCB) for joint arm $\bm{a}$ is
\begin{equation}\label{eq:rucb}
\mathrm{RUCB}_{\bm{a}}(t) = \begin{cases}
\infty & \text{if } n_{\bm{a}}(t)=0,\\
\widehat{\mu}_{\bm{a}}(t) + \alpha_{\bm{a}}(t) & \text{otherwise,}
\end{cases}
\end{equation}
where the first case marks an arm from which nothing has yet been observed, so that $\widehat{\mu}_{\bm{a}}(t)$ is undefined; setting the index to $\infty$ forces every joint arm to be played at least once before any comparison is made. The confidence radius is
\begin{equation}\label{eq:alpha}
\alpha_{\bm{a}}(t) = v^{\frac{1}{1+\varepsilon}}\!\left(\frac{c\log(T^\gamma)}{n_{\bm{a}}(t)}\right)^{\!\frac{\varepsilon}{1+\varepsilon}},
\end{equation}
with $c,\gamma>0$, and~\cite[Prop.~1]{bubeck2012banditsheavytail} gives $\Pr(|\widehat{\mu}_{\bm{a}}(t) - \mu_{\bm{a}}| > \alpha_{\bm{a}}(t)) \le t^{-\gamma}$. Only this concentration property is used below, so any estimator obeying a bound of the form~(\ref{eq:alpha})---median-of-means, or the Catoni-style confidence sequences of~\cite{wang2023catoniCS}---may be substituted. Such a substitution changes the constant $c$ and the way $v$ enters, and hence the constants in all three theorems, but not the rates; Catoni-style estimators give the sharpest constants as $\varepsilon\to1$, at the cost of solving an implicit equation at each round.


\section{Problem~A: Common Rewards, Unobserved Actions}\label{sec:probA}

In Problem~A, all players observe the same reward but cannot see others' actions. Two technical challenges arise. First, since actions are hidden, miscoordination may occur if the players' internal estimates diverge, and the observed reward is then attributed to the wrong joint action. Second, the reward distributions are heavy-tailed, requiring robust estimators to control estimation error under weak moment assumptions. However, because rewards are shared, all players' estimates remain identical under the same deterministic update rule---the key simplifying feature. We impose a lexicographic ordering on $\mathcal{A}$ for consistent tie-breaking: $\bm{a} < \bm{b}$ if there exists $n$ such that $a_i=b_i$ for all $i<n$ and $a_n<b_n$. Each player then computes $\mathrm{RUCB}_{\bm{a}}(t)$ for every joint arm and selects the highest, breaking ties lexicographically, yielding \texttt{mRUCB-A} (Algorithm~\ref{algoA}).

\begin{algorithm}[t]
\caption{\texttt{mRUCB-A}}\label{algoA}
\begin{algorithmic}[1]
\STATE Players agree on a lexicographic ordering of $\mathcal{A}$.
\STATE Init.\ $n_{\bm{a}}(0)\gets 0$, $\widehat{\mu}_{\bm{a}}(0)\gets 0$ for all $\bm{a}\in\mathcal{A}$.
\FOR{$t=1,\dots,T$}
    \STATE Compute $\mathrm{RUCB}_{\bm{a}}(t)$ for all $\bm{a}\in\mathcal{A}$ via~(\ref{eq:rucb}).
    \STATE Select $\bm{a}(t)\gets\arg\max_{\bm{a}}\mathrm{RUCB}_{\bm{a}}(t)$ (ties: lexicographic).
    \STATE Pull individual arm; observe common reward; update statistics.
\ENDFOR
\end{algorithmic}
\end{algorithm}

\begin{theorem}\label{thm:probA}
Under condition~(\ref{eq:moment}), if all players follow \texttt{mRUCB-A}, the expected regret satisfies
$R_T = O\!\left(\log(T)\sum_{\bm{a}\in\mathcal{A}}\Delta_{\bm{a}}^{-1/\varepsilon}\right)$.
\end{theorem}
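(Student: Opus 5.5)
The plan is to reduce Problem~A to a single-agent heavy-tailed bandit over the $K^M$ joint arms and then apply the robust-UCB argument of~\cite{bubeck2012banditsheavytail}.

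\textbf{Step 1 (coordination invariant).} We prove by induction on $t$ that at the start of every round all players hold identical statistics $\{n_{\bm{a}}(t),\widehat{\mu}_{\bm{a}}(t)\}_{\bm{a}\in\mathcal{A}}$, and that the joint arm actually played equals the common argmax $\bm{a}(t)$. At $t=1$ the statistics coincide by the initialization. Suppose they coincide at round $t$. Every player then computes the same indices~(\ref{eq:rucb}) and resolves ties by the same lexicographic rule, so all players select the same joint arm $\bm{a}(t)$. Player $i$ pulls $a_i(t)$, so the realized joint action is exactly $\bm{a}(t)$. The common reward $X_{\bm{a}(t)}$ is therefore correctly attributed to $\bm{a}(t)$ by everyone, and the deterministic update keeps the statistics identical at round $t+1$. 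Consequently the trajectory of joint actions coincides with that of a single centralized robust-UCB learner run on $K^M$ arms with distributions $\nu_{\bm{a}}$. In particular $R_T=\sum_{\bm{a}}\Delta_{\bm{a}}\mathbb{E}[n_{\bm{a}}(T)]$, as in~(\ref{eq:regret_single}).

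\textbf{Step 2 (pull-count bound).} Fix a suboptimal $\bm{a}$ and let $\bm{a}^\star$ be optimal. Set
\[
u_{\bm{a}}=\left\lceil c\,\gamma\log T\,(2^{1+1/\varepsilon})\,v^{1/\varepsilon}\Delta_{\bm{a}}^{-(1+\varepsilon)/\varepsilon}\right\rceil ,
\]
which is chosen so that $n_{\bm{a}}(t)\ge u_{\bm{a}}$ implies $2\alpha_{\bm{a}}(t)<\Delta_{\bm{a}}$. Once $n_{\bm{a}}(t)\ge u_{\bm{a}}$, selecting $\bm{a}$ at round $t$ requires one of three events:
\[
\mathrm{RUCB}_{\bm{a}^\star}(t)\le\mu^\star,\qquad \widehat{\mu}_{\bm{a}}(t)>\mu_{\bm{a}}+\alpha_{\bm{a}}(t),\qquad \mu^\star<\mu_{\bm{a}}+2\alpha_{\bm{a}}(t).
\]
The third is impossible by the choice of $u_{\bm{a}}$. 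For the first two we invoke the concentration bound $\Pr(|\widehat{\mu}-\mu|>\alpha)\le t^{-\gamma}$ from Section~\ref{sec:rucb}, taking a union bound over the possible values of $n_{\bm{a}^\star}(t)$ and $n_{\bm{a}}(t)$, each at most $t$. This yields
\[
\mathbb{E}[n_{\bm{a}}(T)]\le u_{\bm{a}}+\sum_{t}2t\cdot t^{-\gamma},
\]
which is $u_{\bm{a}}+O(1)$ for $\gamma>2$. Multiplying by $\Delta_{\bm{a}}$ gives
\[
\Delta_{\bm{a}}\,u_{\bm{a}}=O\!\left(v^{1/\varepsilon}\log T\cdot\Delta_{\bm{a}}^{-1/\varepsilon}\right),
\]
and summing over $\bm{a}$ yields the bound in Theorem~\ref{thm:probA}. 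The $O(1)$ terms contribute $\sum_{\bm{a}}\Delta_{\bm{a}}$, which is dominated by the main term since $\Delta_{\bm{a}}$ is bounded.

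\textbf{Main obstacle.} The only delicate point is the concentration step. The bound in Section~\ref{sec:rucb} is stated for a fixed time, but $n_{\bm{a}}(t)$ is random and the truncation levels in~(\ref{eq:trunc}) depend on $T$ rather than on $t$. We handle this by conditioning on $n_{\bm{a}}(t)=s$ and applying the fixed-sample version of \cite[Prop.~1]{bubeck2012banditsheavytail} with confidence level $T^{-\gamma}$. The union bound over $s\le T$ and $t\le T$ then costs at most $T^{2-\gamma}$, which is $O(1)$ once $\gamma\ge 2$. If the correct confidence level turns out to be $t^{-\gamma}$ instead of $T^{-\gamma}$, the same argument applies with a summable series.
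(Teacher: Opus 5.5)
Your proposal is correct and follows the paper's route: the coordination invariant reduces Problem~A to a centralized robust-UCB learner on $K^M$ arms, and the standard pull-count bound then applies with threshold $u_{\bm{a}}=\lceil\tau_{\bm{a}}\rceil$. The one real difference is the concentration step. The paper applies the bound directly to each estimator at its random pull count, via $\Pr(\mathcal{G}_t^c)\le K^M t^{-\gamma}$, which is summable for $\gamma>1$. You instead union-bound over the possible counts, which is more rigorous; this should be justified through the i.i.d.\ reward-stack construction rather than by literally conditioning on $n_{\bm{a}}(t)=s$, which would bias the samples. It costs you $\gamma\ge2$ instead of $\gamma>1$, which is harmless because the theorem leaves $\gamma$ free and the experiments use $\gamma=2$.
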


\begin{proof}
Since all players observe the same reward and use the same deterministic update rule with consistent tie-breaking, every player selects the same joint arm at every round. The problem thus reduces to a single-agent heavy-tailed bandit over $K^M$ arms, and the analysis follows~\cite{bubeck2012banditsheavytail}. For each suboptimal arm $\bm{a}$ with gap $\Delta_{\bm{a}}>0$, define the good event at round $t$: $\mathcal{G}_t: |\widehat{\mu}_{\bm{a}}(t)-\mu_{\bm{a}}|\le\alpha_{\bm{a}}(t)$ for all $\bm{a}$. By the concentration bound of Section~\ref{sec:rucb}, $\Pr(\mathcal{G}_t^c)\le K^M t^{-\gamma}$. Under $\mathcal{G}_t$, selection of $\bm{a}$ requires $\widehat{\mu}_{\bm{a}}(t)+\alpha_{\bm{a}}(t)\ge\widehat{\mu}_{\bm{a}^\star}(t)+\alpha_{\bm{a}^\star}(t)$, which implies $2\alpha_{\bm{a}}(t)\ge\Delta_{\bm{a}}$. This fails after $\tau_{\bm{a}} = c\gamma\log(T)(2v^{1/(1+\varepsilon)}/\Delta_{\bm{a}})^{(1+\varepsilon)/\varepsilon}$ pulls. Hence $\mathbb{E}[n_{\bm{a}}(T)] \le \tau_{\bm{a}} + \sum_{t=1}^T\Pr(\mathcal{G}_t^c)$, where the tail sum converges for $\gamma>1$. Summing $\Delta_{\bm{a}}\cdot\mathbb{E}[n_{\bm{a}}(T)]$ over all suboptimal arms gives the result.
\end{proof}

This matches the optimal single-agent heavy-tailed rate over $K^M$ arms. Since the $K^M$ dependence is unavoidable even for a centralized learner, the decentralized agents incur no additional cost from action asymmetry.


\section{Problem~B: Independent Rewards, Observed Actions}\label{sec:probB}

In Problem~B, players observe the joint action but receive \emph{independent} reward samples $X_{\bm{a}(t)}^1,\dots,X_{\bm{a}(t)}^M \stackrel{\text{i.i.d.}}{\sim} \nu_{\bm{a}(t)}$. This reverses Problem~A's structure: players see all actions but their estimates diverge because each empirical mean uses different samples, so one player may conclude that an arm is suboptimal while another player's interval still overlaps. An index rule applied independently by each player would therefore cause persistent miscoordination. \texttt{mRUCB-Intervals} avoids this by replacing index maximization with \emph{round-robin elimination}: players cycle through a common active set $S$, and an arm leaves $S$ only through a signal that every player observes.

For each joint arm $\bm{a}$ and player~$i$ the algorithm maintains
\begin{equation}\label{eq:interval}
I_{\bm{a}}^i(t) = \left[\widehat{\mu}_{\bm{a}}^i(t)-\alpha_{\bm{a}}(t),\;\widehat{\mu}_{\bm{a}}^i(t)+\alpha_{\bm{a}}(t)\right],
\end{equation}
where $\alpha_{\bm{a}}(t)$ is common to all players because it depends only on the shared pull count $n_{\bm{a}}(t)$. Elimination proceeds in three stages. \emph{Detection}: if player $i$ finds that $I_{\bm{a}}^i(t)$ lies strictly below and disjoint from the interval of another active arm, then $\bm{a}$ is dominated from player $i$'s perspective. \emph{Signaling}: player $i$ then deviates from the prescribed action by pulling a \emph{different} individual arm, the only form of implicit communication available. \emph{Propagation}: since actions are observable, all players detect the mismatch between the scheduled joint arm $\bm{a}(t)$ and the realized one $\bm{a}'(t)$, and mark $\bm{a}(t)$ for removal regardless of whether their own intervals support it. Two conventions keep the players' statistics aligned: the reward of a signaling round is discarded, and removals take effect at the end of the current cycle. Algorithm~\ref{algoB} gives the procedure.

\begin{algorithm}[t]
\caption{\texttt{mRUCB-Intervals}}\label{algoB}
\begin{algorithmic}[1]
\STATE Players agree on an ordering of $\mathcal{A}$; set $S\gets\mathcal{A}$, $P\gets\emptyset$.
\WHILE{$t \le T$}
  \FOR{each $\bm{a}\in S$ in order}
    \IF{some player $i$ finds $I_{\bm{a}}^i(t)$ strictly below the interval of another arm of $S$}
      \STATE That player pulls a \emph{different} individual arm; all players observe $\bm{a}'(t)\neq\bm{a}$ and set $P\gets P\cup\{\bm{a}\}$; the reward of this round is discarded.
    \ELSE
      \STATE All players pull the components of $\bm{a}$; player $i$ observes $X_{\bm{a}}^i$ and updates $I_{\bm{a}}^i$ via~(\ref{eq:interval}); $n_{\bm{a}}\gets n_{\bm{a}}+1$.
    \ENDIF
  \ENDFOR
  \STATE $S\gets S\setminus P$; $P\gets\emptyset$.
\ENDWHILE
\end{algorithmic}
\end{algorithm}

\begin{lemma}\label{lem:sync}
Under \texttt{mRUCB-Intervals}, at every round all players hold the same active set $S(t)$ and the same pull counts $n_{\bm{a}}(t)$. Moreover, if $\bm{a}$ is the arm scheduled at round $t$, then $n_{\bm{b}}(t)\ge n_{\bm{a}}(t)$ for every $\bm{b}\in S(t)$.
\end{lemma}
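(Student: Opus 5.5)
We prove both claims together by induction over rounds. The state is the triple consisting of the active set $S$, the pending set $P$, and the vector of pull counts $(n_{\bm{a}})_{\bm{a}\in\mathcal{A}}$, together with the position of the current cycle in the agreed ordering of $S$. The induction hypothesis is that at the start of round $t$ all players hold the same triple and the same schedule position. The base case is immediate: every player initializes $S=\mathcal{A}$, $P=\emptyset$ and $n_{\bm{a}}=0$, and the ordering is fixed beforehand.

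For the inductive step, fix round $t$ and let $\bm{a}$ be the scheduled arm. By hypothesis every player agrees on $\bm{a}$. There are two cases, according to whether some player triggers detection.

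\emph{No detection.} Every player pulls its component of $\bm{a}$, so the realized joint arm equals $\bm{a}$. Each player observes this because actions are observable, and each increments $n_{\bm{a}}$ by one. The players' private intervals differ, but they do not enter the shared state.

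\emph{Detection by at least one player.} At least one component of the realized joint arm differs from $\bm{a}$, so $\bm{a}'(t)\neq\bm{a}$. This holds even when several players deviate simultaneously. Since $\bm{a}'(t)$ is observed by all players, each one adds $\bm{a}$ to $P$, discards the reward, and leaves $n$ unchanged.

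The crucial point is that the shared-state update depends only on the observable event $\{\bm{a}'(t)=\bm{a}\}$, never on private samples. At the end of a cycle every player performs $S\gets S\setminus P$ with the same $P$, and the next schedule position is a deterministic function of the common state. This closes the induction for the first claim. It also justifies that $\alpha_{\bm{a}}(t)$ is common, as asserted before \eqref{eq:interval}.

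For the second claim, we strengthen the induction hypothesis with a balance invariant. At the start of each cycle, the counts of arms in $S$ differ by at most one. Moreover, within a cycle, an arm that has already been visited has count at most one more than an arm not yet visited, and the scheduled arm has the minimal count among arms of $S$.

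Each cycle visits every arm of $S$ once, and a visit increments the arm's count by at most one (exactly one if not signaled). Removals happen only at cycle boundaries, so $S$ is fixed during a cycle. Suppose that at the start of a cycle all counts in $S$ lie in $\{m,m+1\}$ and that the ordering visits arms with count $m$ first. Then at the time the scheduled arm $\bm{a}$ is visited, its count is at most the count of every other $\bm{b}\in S$. Arms visited earlier have been incremented, and arms not yet visited have count at least $n_{\bm{a}}$.

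The main obstacle is that signaling rounds do not increment the count, which can break exact equality. It also means the agreed fixed ordering need not list lower-count arms first. I expect the cleanest fix is to observe the following: a signaled arm is placed in $P$ and removed at the end of that same cycle. So every arm that survives into the next cycle was pulled exactly once in every cycle it participated in. Hence all arms of $S$ at the start of any cycle have identical counts $m$, equal to the number of completed cycles. The invariant then reduces to a simple comparison within the cycle. An arm visited earlier has count $m+1$, while the scheduled arm and arms not yet visited have count $m$. Therefore $n_{\bm{b}}(t)\ge n_{\bm{a}}(t)$ for all $\bm{b}\in S(t)$, and this argument does not need to control the effect of signaling on counts beyond its removal at the cycle end.
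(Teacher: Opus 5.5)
Your proposal is correct and follows essentially the same route as the paper: induction on rounds, a shared state that changes only through the observed realized joint arm, equal counts at cycle boundaries because a signaled arm is removed at the end of the cycle in which it is signaled, and minimality of the not-yet-scheduled arms within a cycle. One small imprecision is that an arm $\bm{b}$ signaled earlier in the current cycle stays in $S(t)$ until the cycle ends, with count $m$ rather than $m+1$; this still gives $n_{\bm{b}}(t)\ge n_{\bm{a}}(t)$, so the conclusion is unaffected.
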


\begin{proof}
Both claims follow by induction on $t$. Initially $S=\mathcal{A}$ and all counts are zero. The scheduled arm is a deterministic function of $S$ and the position in the cycle, which are common by hypothesis. Since actions are observed, every player sees the realized joint arm and applies the same count update, and removals are triggered only by observed deviations, so $S$ remains common. Within a cycle each active arm is scheduled exactly once and signaling rounds increment no counts, so all active arms have equal counts at cycle boundaries and, at any point inside a cycle, the arms not yet scheduled---including the scheduled arm itself---have the smallest counts.
\end{proof}

\begin{theorem}\label{thm:probB}
If all players follow \texttt{mRUCB-Intervals} under~(\ref{eq:moment}), then
\begin{equation}\label{eq:regret_B}
\begin{split}
R_T \le\; & c\gamma\, 4^{\frac{1+\varepsilon}{\varepsilon}}v^{\frac{1}{\varepsilon}}\log(T)\sum_{\bm{a}\neq\bm{a}^\star}\Delta_{\bm{a}}^{-1/\varepsilon}\\
&+ \sum_{\bm{a}\neq\bm{a}^\star}\Delta_{\bm{a}} + (K^M-1)\Delta_{\max} + O(1),
\end{split}
\end{equation}
where the $O(1)$ term collects the contribution of the failure event and is independent of $T$ for $\gamma>1$.
\end{theorem}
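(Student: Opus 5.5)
The plan is to reduce everything to a good event on which every player's interval is correct, bound the number of times each suboptimal arm is scheduled before some player detects domination, and charge the extra terms to the discarded signaling rounds and the end-of-cycle removal delay. First, Lemma~\ref{lem:sync} tells us that all players share $S(t)$ and the counts $n_{\bm a}(t)$, hence the same radius $\alpha_{\bm a}(t)$. Let $\mathcal{E}$ be the event that $|\widehat\mu^i_{\bm a}(t)-\mu_{\bm a}|\le\alpha_{\bm a}(t)$ for all players $i$, all arms $\bm a$ and all rounds $t\le T$. The concentration bound of Section~\ref{sec:rucb} and a union bound give $\Pr(\mathcal{E}^c)\le\sum_t MK^M t^{-\gamma}$. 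This is summable for $\gamma>1$, so on $\mathcal{E}^c$ we can afford to charge its contribution crudely to the $O(1)$ term. To make that work, I would split it round by round, charging $\Delta_{\max}\Pr(\mathcal{G}_t^c)$ at each round as in the proof of Theorem~\ref{thm:probA}, rather than charging $T\Delta_{\max}$ once.

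Second, on $\mathcal{E}$, $\bm a^\star$ is never eliminated. Any dominating arm $\bm b$ would need its lower endpoint to exceed the upper endpoint of $\bm a^\star$ for some player, and on $\mathcal{E}$ that would force $\mu_{\bm b}>\mu^\star$. So $\bm a^\star\in S$ throughout. Now fix a suboptimal $\bm a$ scheduled at round $t$. By Lemma~\ref{lem:sync}, $n_{\bm a^\star}(t)\ge n_{\bm a}(t)=:n$, so $\alpha_{\bm a^\star}(t)\le\alpha_{\bm a}(t)$. For every player $i$ on $\mathcal{E}$, the lower end of $I^i_{\bm a^\star}$ is at least $\mu^\star-2\alpha_{\bm a^\star}(t)\ge\mu^\star-2\alpha_{\bm a}(t)$, and the upper end of $I^i_{\bm a}$ is at most $\mu_{\bm a}+2\alpha_{\bm a}(t)$. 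Detection therefore fires once $4\alpha_{\bm a}(t)<\Delta_{\bm a}$. Solving $4v^{1/(1+\varepsilon)}(c\gamma\log T/n)^{\varepsilon/(1+\varepsilon)}<\Delta_{\bm a}$ gives
\[
n\ge \tau_{\bm a}:=c\gamma\log(T)\,4^{\frac{1+\varepsilon}{\varepsilon}}v^{\frac1\varepsilon}\Delta_{\bm a}^{-\frac{1+\varepsilon}{\varepsilon}} .
\]
Multiplying by $\Delta_{\bm a}$ gives the leading term. One technical point is that (\ref{eq:alpha}) uses $\log(T^\gamma)$, so the constant appears as $c\gamma$; I would just track that.

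Third come the lower-order terms. The arm $\bm a$ may still be pulled once more after its count crosses $\tau_{\bm a}$ because of integer rounding, which costs at most $\Delta_{\bm a}$ and produces the $\sum\Delta_{\bm a}$ term. The signaling round for $\bm a$ itself is a deviation whose realized joint arm is some arbitrary arm, so it costs at most $\Delta_{\max}$ for each of the $K^M-1$ suboptimal arms. Because removal is deferred to the end of the cycle, $\bm a$ is not rescheduled within the same cycle. Each arm therefore triggers exactly one signaling round on $\mathcal{E}$, which gives $(K^M-1)\Delta_{\max}$.

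The main obstacle is the timing in the second step. Detection is checked against all arms in $S$, including $\bm a^\star$, but the relevant comparison with $\bm a^\star$ uses $\bm a^\star$'s current interval, which Lemma~\ref{lem:sync} controls only through the count ordering at the moment $\bm a$ is scheduled. I would argue directly at the scheduling round using $n_{\bm a^\star}\ge n_{\bm a}$. This is exactly why the constant is $4$ rather than $2$. A related subtlety is ensuring that some player fires detection: on $\mathcal{E}$ every player does, so this is fine.
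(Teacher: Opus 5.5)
Outside the failure event your argument matches the paper's Steps~1--3. The optimal arm $\bm a^\star$ is never dominated. The count ordering of Lemma~\ref{lem:sync} gives $\alpha_{\bm a^\star}(t)\le\alpha_{\bm a}(t)$, so every player detects $\bm a$ once $4\alpha_{\bm a}(t)<\Delta_{\bm a}$. Each suboptimal arm is then pulled at most $\tau_{\bm a}+1$ times, and each removal costs one signaling round.

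The step that does not hold is the failure accounting. There you leave your global event $\mathcal{E}$ and charge $\Delta_{\max}\Pr(\mathcal{G}_t^c)$ round by round, ``as in the proof of Theorem~\ref{thm:probA}.'' The paper's Step~4 does the same, so you have inherited this gap rather than introduced it. Per-round charging works for \texttt{mRUCB-A} because the action at round $t$ depends only on the current estimates: on $\mathcal{G}_t$ an over-sampled suboptimal arm cannot be selected, whatever happened earlier. In \texttt{mRUCB-Intervals} the action is dictated by $S(t)$, which records every earlier detection, and removals are irreversible. A single failure at an early round $s$ can make some player see $I^i_{\bm a^\star}$ strictly below another active interval. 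Then $\bm a^\star$ is removed for good, and every later non-signaling round plays a suboptimal arm, even on rounds where $\mathcal{G}_t$ holds. That one failure costs order $(T-s)\Delta_{\min}$, not $\Delta_{\max}$. Your survival and one-signal-per-arm arguments are proved on $\mathcal{E}$, so what must be paid for is $\mathcal{E}^c$, at cost up to $T\Delta_{\max}$. The per-round bound only gives $\Pr(\mathcal{E}^c)\le MK^M\sum_{t\ge1}t^{-\gamma}$, which is a constant and not even necessarily below $1$. The resulting term is linear in $T$.

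To close the gap you need a failure probability that is small in $T$ rather than in $t$. The $\log(T^\gamma)$ in~(\ref{eq:trunc}) and~(\ref{eq:alpha}) provides exactly that. For each fixed player, arm and sample count $n\le T$, the truncated mean of the first $n$ samples leaves its radius with probability at most a constant times $T^{-\gamma}$. Take a union bound over the $MK^M$ player--arm pairs and the at most $T$ counts. This also handles the fact that $n_{\bm a}(t)$ is random, which the per-round statement glosses over. It gives $\Pr(\mathcal{E}^c)=O(MK^MT^{1-\gamma})$, and hence a failure contribution $T\Delta_{\max}\Pr(\mathcal{E}^c)=O(MK^M\Delta_{\max}T^{2-\gamma})$. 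This is horizon-independent for $\gamma\ge2$, which is the value used in the experiments. It is not horizon-independent for $\gamma\in(1,2)$, so the claimed $O(1)$ remainder for all $\gamma>1$ would need a different argument.
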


\begin{proof}
Let $\mathcal{G}_t$ be the event that $|\widehat{\mu}_{\bm{a}}^i(t)-\mu_{\bm{a}}|\le\alpha_{\bm{a}}(t)$ for all $i$ and all $\bm{a}$; by Section~\ref{sec:rucb} and a union bound over the $MK^M$ player-arm pairs, $\Pr(\mathcal{G}_t^c) \le M K^M t^{-\gamma}$.

\emph{Step~1: $\bm{a}^\star$ is never eliminated.} Under $\mathcal{G}_t$, for every player $i$ and every active $\bm{b}$, the lower end of $I^i_{\bm{a}^\star}$ satisfies $\widehat{\mu}^i_{\bm{a}^\star}+\alpha_{\bm{a}^\star} \ge \mu^\star \ge \mu_{\bm{b}} \ge \widehat{\mu}^i_{\bm{b}}-\alpha_{\bm{b}}$, so $I^i_{\bm{a}^\star}$ never lies strictly below the interval of another active arm. No player signals on $\bm{a}^\star$, and by Lemma~\ref{lem:sync} no player removes it.

\emph{Step~2: Elimination time.} Let $\bm{a}\neq\bm{a}^\star$ be scheduled at round $t$. By Lemma~\ref{lem:sync}, $n_{\bm{a}^\star}(t)\ge n_{\bm{a}}(t)$ and hence $\alpha_{\bm{a}^\star}(t)\le\alpha_{\bm{a}}(t)$. Under $\mathcal{G}_t$ the upper end of $I^i_{\bm{a}}$ is at most $\mu_{\bm{a}}+2\alpha_{\bm{a}}(t)$ and the lower end of $I^i_{\bm{a}^\star}$ is at least $\mu^\star-2\alpha_{\bm{a}}(t)$, so \emph{every} player detects domination as soon as $4\alpha_{\bm{a}}(t)<\Delta_{\bm{a}}$. By~(\ref{eq:alpha}) this holds once $n_{\bm{a}}(t)$ exceeds
\begin{equation}\label{eq:tau_B}
\tau_{\bm{a}} = c\gamma\log(T)\left(\frac{4v^{1/(1+\varepsilon)}}{\Delta_{\bm{a}}}\right)^{(1+\varepsilon)/\varepsilon},
\end{equation}
so that $n_{\bm{a}}(T)\le\tau_{\bm{a}}+1$: the arm is signaled the next time it is scheduled and is removed at the end of that cycle. Note that the detection threshold is $4\alpha_{\bm{a}}$, rather than the $2\alpha_{\bm{a}}$ of an index comparison, because separating two intervals requires both radii to be small.

\emph{Step~3: Signaling cost.} Each arm is removed exactly once, and by Lemma~\ref{lem:sync} its removal consumes exactly one round in which the realized joint action is unintended, contributing regret at most $\Delta_{\max}$. Simultaneous deviations by several players still consume a single round, so the total signaling cost is at most $(K^M-1)\Delta_{\max}$, independent of both $T$ and $M$.

\emph{Step~4: Summing.} By Step~2, $\sum_{\bm{a}\neq\bm{a}^\star}\Delta_{\bm{a}}(\tau_{\bm{a}}+1)$ gives the first two terms of~(\ref{eq:regret_B}). Adding the failure contribution $\sum_{t\le T}\Pr(\mathcal{G}_t^c)\Delta_{\max}\le MK^M\Delta_{\max}\sum_{t\ge1}t^{-\gamma}=O(1)$ for $\gamma>1$, together with Step~3, yields the bound.
\end{proof}

The leading term matches Problem~A up to the constant $4^{(1+\varepsilon)/\varepsilon}$ in place of $2^{(1+\varepsilon)/\varepsilon}$, and the remaining terms are independent of the horizon. The mechanism thus uses action deviations as a 1-bit implicit communication channel; this suffices because $\alpha_{\bm{a}}(t)$ is common across players, so a single detection is enough to eliminate an arm for everyone.


\section{Problem~C: Independent Rewards, Unobserved Actions}\label{sec:probC}

Problem~C combines both asymmetries, eliminating the coordination mechanisms of Problems~A (shared rewards) and~B (observable actions). A player cannot tell whether the realized reward corresponds to the intended joint arm or to a different one caused by another player's deviation, and independent samples simultaneously prevent synchronized estimates. Adaptive, index-driven coordination is therefore unavailable, and exploration must be scheduled deterministically from the round index alone, which every player can reproduce.

\texttt{mHT-DSEE} (Algorithm~\ref{algoC}) follows the DSEE framework~\cite{vakili2013dsee}. Fix an increasing $w(t)\to\infty$ and let $D(t)=\lceil w(t)\log t\rceil$ be the target number of samples of each joint arm by round $t$. If fewer than $K^M D(t)$ exploration rounds have been used, round $t$ is an exploration round and the next joint arm in a fixed cyclic order is played; otherwise every player commits to the maximizer of its own RUCB, computed from exploration samples only. Because the test $N(t)<K^MD(t)$ depends only on $t$, players stay synchronized without communication. Two features matter for the analysis: the schedule is \emph{anytime}, requiring neither $T$ nor the gaps, and the confidence radius uses $\log t$ rather than $\log T$.

\begin{algorithm}[t]
\caption{\texttt{mHT-DSEE}}\label{algoC}
\begin{algorithmic}[1]
\STATE Players agree on an ordering of $\mathcal{A}$ and on $w(t)\uparrow\infty$; $N\gets0$.
\FOR{$t=1,\dots,T$}
    \IF{$N < K^M\lceil w(t)\log t\rceil$}
        \STATE Play the next joint arm in the cyclic order; $N\gets N+1$; each player stores its own reward.
    \ELSE
        \STATE Player $i$ plays its own component of $\arg\max_{\bm{a}}\mathrm{RUCB}^i_{\bm{a}}(t)$, computed from exploration samples.
    \ENDIF
\ENDFOR
\end{algorithmic}
\end{algorithm}

\begin{theorem}\label{thm:probC}
Under~(\ref{eq:moment}), if all players follow \texttt{mHT-DSEE} with $\gamma>1$, then
\begin{equation}\label{eq:regret_C}
R_T \le \Delta_{\max}K^M D(T) + \Delta_{\max}t_0 + O(1),
\end{equation}
where $t_0=\min\{t: w(t) > c\gamma(2v^{1/(1+\varepsilon)}/\Delta_{\min})^{(1+\varepsilon)/\varepsilon}\}$ depends on $\varepsilon,v$ and the gaps but \emph{not} on $T$. With $w(t)=\lceil\log t\rceil$ this gives $R_T = O(K^M\log^2 T)$.
\end{theorem}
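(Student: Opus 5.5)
I will split the regret into exploration rounds and exploitation rounds, as in the DSEE analysis of~\cite{vakili2013dsee}. \emph{Exploration rounds.} By construction, the counter $N$ never exceeds $K^M D(t)$ at any round $t$, and $D$ is nondecreasing. So the total number of exploration rounds up to $T$ is at most $K^M D(T)+1$. Each such round costs at most $\Delta_{\max}$. This gives the first term of~(\ref{eq:regret_C}), with the extra $+1$ absorbed into the $O(1)$.

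A preliminary point is synchronization. Players cannot see each other's actions, so all miscoordination during exploration must be ruled out. The test $N<K^M\lceil w(t)\log t\rceil$ and the cyclic position depend only on $t$, so every player plays its component of the same scheduled joint arm. Hence each player's stored exploration sample really comes from the intended arm. Moreover, after each exploration phase every arm has at least $D(t)-1$ (essentially $D(t)$) samples per player, and these are i.i.d.\ draws from $\nu_{\bm a}$ that are independent across players.

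\emph{Exploitation rounds.} At an exploitation round $t$ every arm has $n_{\bm a}\ge D(t)\ge w(t)\log t$ samples, the same count for all players. With the radius built from $\log t$, this gives
\[
\alpha_{\bm a}(t)\le v^{\frac{1}{1+\varepsilon}}\Bigl(\tfrac{c\gamma}{w(t)}\Bigr)^{\frac{\varepsilon}{1+\varepsilon}}.
\]
For $t\ge t_0$ the definition of $t_0$ yields $2\alpha_{\bm a}(t)<\Delta_{\min}$. Let $\mathcal G_t$ be the event that $|\widehat\mu^i_{\bm a}(t)-\mu_{\bm a}|\le\alpha_{\bm a}(t)$ for all $i$ and all $\bm a$. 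By the concentration bound of Section~\ref{sec:rucb} and a union bound, $\Pr(\mathcal G_t^c)\le MK^Mt^{-\gamma}$. On $\mathcal G_t$ with $t\ge t_0$, the argument of Theorem~\ref{thm:probA} shows that every player's RUCB maximizer is $\bm a^\star$: a suboptimal arm $\bm a$ would require $2\alpha_{\bm a}\ge\Delta_{\bm a}$. The maximizer is unique, so ties do not arise. All players therefore play their component of $\bm a^\star$ and the joint action is optimal, with zero regret. Rounds $t<t_0$ cost at most $\Delta_{\max}t_0$. The failure rounds contribute at most $\Delta_{\max}MK^M\sum_t t^{-\gamma}=O(1)$ since $\gamma>1$. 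Summing the three contributions gives~(\ref{eq:regret_C}). With $w(t)=\lceil\log t\rceil$ we have $D(T)=O(\log^2T)$, and $t_0$ is a constant, so $R_T=O(K^M\log^2T)$.

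The main obstacle I expect is making the concentration step rigorous. The sample size $n_{\bm a}(t)$ at round $t$ is deterministic, since it is fixed by the schedule, which helps. However, the truncation levels in~(\ref{eq:trunc}) are written with $\log(T^\gamma)$, while the anytime radius uses $\log t$. I would redefine the truncation with $\log(t^\gamma)$, or argue monotonicity, so that~\cite[Prop.~1]{bubeck2012banditsheavytail} applies at each fixed $t$ with failure probability $t^{-\gamma}$. I would also use the fact that between exploration phases the samples are unchanged, so per-round union bounds suffice. A secondary subtlety is that exploitation needs all players to agree only on $\bm a^\star$, not on their estimates, which is exactly what $\mathcal G_t$ ensures.
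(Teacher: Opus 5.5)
Your proposal is correct and follows the paper's proof essentially step for step. The steps are: the exploration budget $K^M D(T)$; a good event at confidence level $t^{-\gamma}$, union-bounded over the $MK^M$ player--arm pairs; the reduction of $2\alpha<\Delta_{\min}$ to $w(t)>\kappa$ via $D(t)\ge w(t)\log t$, so that all players agree on $\bm a^\star$ for $t\ge t_0$; and a summable failure term. Your remark that the truncation levels should use $\log(t^\gamma)$ instead of $\log(T^\gamma)$ for the $t^{-\gamma}$ bound to hold addresses a point the paper passes over silently, and your proposed fix is the right one.
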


\begin{proof}
\emph{Step~1: Exploration.} At most $K^MD(T)$ rounds are exploration rounds, each contributing regret at most $\Delta_{\max}$, which is the first term.

\emph{Step~2: Good event.} At an exploitation round $t$ every arm has $D(t)$ samples for each player. By Section~\ref{sec:rucb} with confidence level $t^{-\gamma}$ and a union bound over the $MK^M$ player-arm pairs, the event $\mathcal{G}_t$ that $|\widehat{\mu}^i_{\bm{a}}-\mu_{\bm{a}}|\le\alpha(D(t))$ for all $i,\bm{a}$ has $\Pr(\mathcal{G}_t^c)\le MK^Mt^{-\gamma}$, where $\alpha(n)=v^{1/(1+\varepsilon)}(c\log(t^\gamma)/n)^{\varepsilon/(1+\varepsilon)}$.

\emph{Step~3: Coordination.} On $\mathcal{G}_t$, $\mathrm{RUCB}^i_{\bm{a}}\le\mu_{\bm{a}}+2\alpha(D(t))$ and $\mathrm{RUCB}^i_{\bm{a}^\star}\ge\mu^\star$ for every player $i$, so every player selects $\bm{a}^\star$ once $2\alpha(D(t))<\Delta_{\min}$, i.e.\ once
\begin{equation}\label{eq:threshold}
D(t) > c\gamma\log(t)\left(\frac{2v^{1/(1+\varepsilon)}}{\Delta_{\min}}\right)^{(1+\varepsilon)/\varepsilon} =: \kappa\log t .
\end{equation}
The threshold does not depend on the player index, so the players agree; and since $D(t)\ge w(t)\log t$, condition~(\ref{eq:threshold}) holds as soon as $w(t)>\kappa$, that is for every $t\ge t_0$. This is the role of the anytime schedule: both sides of~(\ref{eq:threshold}) scale with $\log t$, so the crossing time $t_0$ is determined by $w$ and the gaps alone and does not grow with the horizon. Each player then plays its component of $\bm{a}^\star$, and the realized joint arm is exactly $\bm{a}^\star$.

\emph{Step~4: Exploitation regret.} Exploitation rounds with $t<t_0$ contribute at most $\Delta_{\max}t_0$. For $t\ge t_0$, regret is incurred only on $\mathcal{G}_t^c$, contributing at most $MK^M\Delta_{\max}\sum_{t\ge1}t^{-\gamma}=O(1)$ for $\gamma>1$. Adding Step~1 gives~(\ref{eq:regret_C}). Taking $w(t)=\lceil\log t\rceil$ gives $D(T)=O(\log^2T)$.
\end{proof}

\begin{remark}
The choice of $w$ trades the two terms of~(\ref{eq:regret_C}) against each other: faster growth shortens $t_0$ but enlarges the exploration budget $D(T)$. Any $w\uparrow\infty$ yields $o(\log^{1+\eta}T)$ regret for the corresponding $\eta$, and knowledge of $\Delta_{\min}$ would allow the constant schedule $w\equiv\kappa$ and hence $O(\log T)$; the extra factor is the price of not knowing the gaps.
\end{remark}


\section{Experiments}\label{sec:experiments}

\subsection{Setup}

We take $M=2$ players, $K=2$ individual arms ($K^M=4$ joint arms), and horizon $T=10^6$, averaging over $10$ independent runs on the fixed instance $\bm{\mu}=(0.44,0.57,0.91,0.25)$, so that the gaps are $(0.46,0.34,-,0.65)$. Pulling joint arm $a$ yields a Pareto reward with shape $a_0=2$ and scale $x_m=\mu_a(a_0-1)/a_0$, so that $\mathbb{E}[X_a]=\mu_a$. This distribution has finite mean and \emph{infinite variance}: its centered moments of order $1+\varepsilon$ are finite exactly for $\varepsilon<1$. We therefore set $\varepsilon=0.5$, for which $\mathbb{E}|X_a-\mu_a|^{1.5}\le0.75<1=v$ for every $\mu_a\in[0,1]$, so~(\ref{eq:moment}) holds; taking $\varepsilon=1$ would instead require a finite second moment and is not admissible for this reward family. All three algorithms use the truncated-mean estimator~(\ref{eq:trunc}) with $(c,\gamma)=(1,2)$, i.e.\ exactly the estimator the analysis assumes, and \texttt{mHT-DSEE} uses $w(t)=\lceil\log t\rceil$.

\subsection{Results}

\begin{figure}[t]
    \centering
    \includegraphics[width=0.9\linewidth]{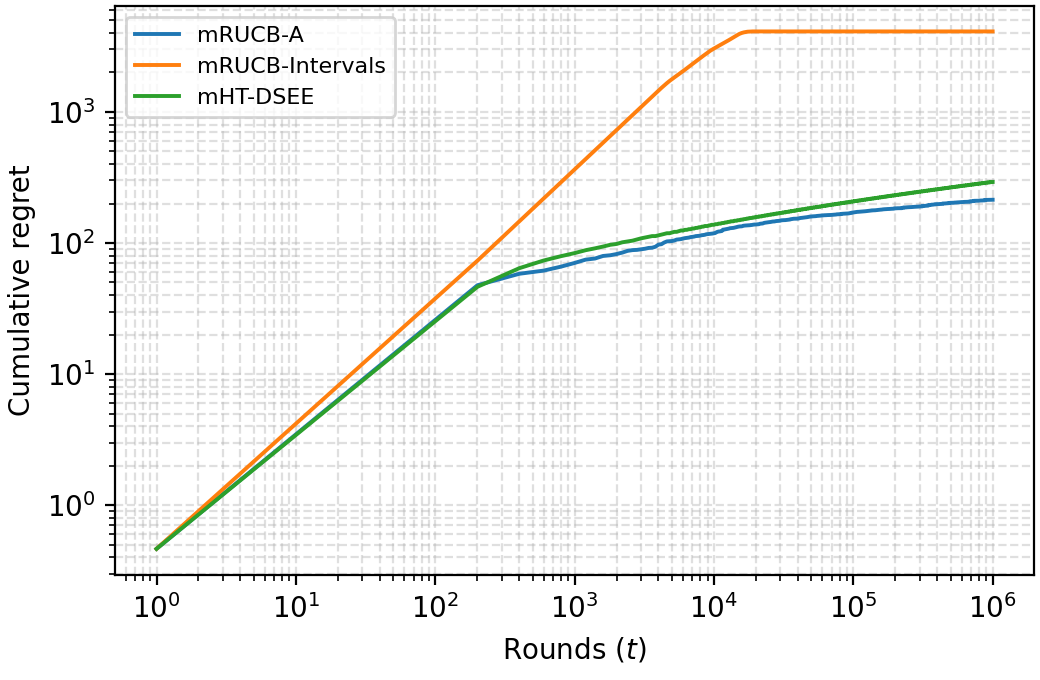}
    \caption{Mean cumulative regret over $10$ runs under Pareto rewards with infinite variance ($M{=}2$, $K{=}2$, $T{=}10^6$), log--log axes.}
    \label{fig:regret}
\end{figure}

Figure~\ref{fig:regret} shows mean cumulative regret. All three curves are clearly sublinear, confirming that each algorithm identifies $\bm{a}^\star$ under infinite-variance rewards and with the robust estimator in place. \texttt{mRUCB-A} ends at $214\pm24$ and \texttt{mHT-DSEE} at $292\pm21$, both still growing slowly, while \texttt{mRUCB-Intervals} ends at $4115\pm285$ but is exactly flat beyond ${\approx}7\times10^4$ rounds.

The ordering at this horizon is governed by constants rather than by rates, and is instructive. Elimination in Problem~B requires two intervals to separate, i.e.\ $4\alpha_{\bm{a}}<\Delta_{\bm{a}}$, whereas the index comparisons of Problems~A and~C need only $2\alpha_{\bm{a}}<\Delta_{\bm{a}}$; by~(\ref{eq:alpha}) this is $2^{(1+\varepsilon)/\varepsilon}=8$ times more samples of each arm when $\varepsilon=0.5$, which is what the early growth of the orange curve buys. The payoff is that once the active set collapses, \texttt{mRUCB-Intervals} incurs no further regret at all, whereas the other two keep exploring; the curves would therefore cross at larger horizons. Similarly, \texttt{mHT-DSEE} is inexpensive here because its exploration budget $K^M\lceil\log^2 t\rceil$ is only ${\approx}760$ rounds at $T=10^6$, even though its rate is the worst of the three. The experiments thus support the theory while showing that the hierarchy of Table~\ref{tab:summary} is asymptotic: at moderate horizons the constants attached to each coordination mechanism dominate. A further practical caveat is that the joint action space grows as $K^M$, so larger instances lengthen the exploration phases of \texttt{mHT-DSEE} and slow the elimination cascade of \texttt{mRUCB-Intervals}.


\section{Conclusion}

We studied multi-agent bandits with heavy-tailed rewards under three information asymmetries. Our algorithms show that effective decentralized learning is achievable even under significant asymmetry and non-sub-Gaussian noise: shared rewards (Problem~A) enable costless synchronization; observed actions (Problem~B) provide an implicit signaling channel whose cost is independent of the horizon and of the number of players; and the fully asymmetric setting (Problem~C) requires a pre-committed anytime schedule at a $\log T$ factor of additional cost. That observable actions compensate for the loss of shared rewards at leading order is a notable positive result, while the barrier between Problems~B and~C shows the value of even minimal observability.

Two limitations point to future work. First, our guarantees, like those of~\cite{bubeck2012banditsheavytail, vakili2013dsee}, assume that $(\varepsilon,v)$ are known: a conservative choice (smaller $\varepsilon$ or larger $v$) keeps every bound valid but inflates the confidence radius and hence the regret, so adapting to unknown tail heaviness---plausibly through self-normalized constructions such as~\cite{wang2023catoniCS}---remains open. Second, tighter lower bounds for Problem~C would clarify whether the extra $\log T$ factor is necessary without shared information. Extensions to adversarial or non-stationary rewards, and structured reward models such as linear or factored bandits that would mitigate the exponential $K^M$ dependence, are also natural directions.


\bibliographystyle{ieeetr}
\bibliography{references}

\end{document}